\DeclareMathOperator*{\minimize}{minimize}
\newcommand{\figref}[1]{Figure~\ref{#1}}
\newcommand{\tabref}[1]{Table~\ref{#1}}
\newcommand{\egref}[1]{Example~\ref{#1}}
\newtheorem{example}{Example}
\newtheorem{problem}{Problem}
\newtheorem{theorem}{Theorem}
\def\Rset{\mathbb{R}}
\def\act{{Act}}
\def\sinit{{\overline{s}}}
\def\Rset{\mathbb{R}}
\newcommand{\PONE}[1]{\textsf{Player~1}}
\newcommand{\PTWO}[1]{\textsf{Player~2}}
\newcommand{\ra}[1]{\renewcommand{\arraystretch}{#1}}
\newcommand{\startpara}[1]{{\vskip1pt\noindent{\bf #1.}}}
\newcommand\footnoteref[1]{\protected@xdef\@thefnmark{\ref{#1}}\@footnotemark}
\begin{document}
%
\title{\LARGE \bf
Counterexamples for Robotic Planning \\
Explained in Structured Language}

%
%
%

\author{Lu Feng$^{1}$, 
	Mahsa Ghasemi$^{2}$, 
	Kai-Wei Chang$^{3}$, 
	and Ufuk Topcu$^{4}$
\thanks{$^{1}$Lu Feng is with the Department of Computer Science,
	University of Virginia, Charlottesville, VA 22904, USA
        {\tt\small lu.feng@virginia.edu}}%
\thanks{$^{2}$Mahsa Ghasemi is with the Department of Mechanical Engineering,
	The University of Texas at Austin, Austin, TX 78712-1221,  USA
        {\tt\small mahsa.ghasemi@utexas.edu}}%
\thanks{$^{3}$Kai-Wei Chang is with the Department of Computer Science, UCLA, Los Angeles, CA 90095-1596, USA
        {\tt\small kwchang.cs@ucla.edu}}%
\thanks{$^{4}$Ufuk Topcu is with the Department of Aerospace Engineering and Engineering Mechanics,
	The University of Texas at Austin, Austin, TX 78712-1221,  USA
        {\tt\small utopcu@utexas.edu}}%
\thanks{This work was supported in part by NASA grant \# NNX17AD04G, NSF grant \# 1651089, AFRL grant \# FA8650-15-C-2546, and ONR grant \# N00014-15-IP-00052.}%
}

\maketitle
\thispagestyle{empty}
\pagestyle{empty}

\begin{abstract}
Automated techniques such as model checking have been used to verify models of robotic mission plans based on Markov decision processes (MDPs) and generate counterexamples that may help diagnose requirement violations.
However, such artifacts may be too complex for humans to understand, because existing representations of counterexamples typically include a large number of paths or a complex automaton. 
To help improve the interpretability of counterexamples, we define a notion of \emph{explainable counterexample}, which includes a set of structured natural language sentences to describe the robotic behavior that lead to a requirement violation in an MDP model of robotic mission plan. 
We propose an approach based on mixed-integer linear programming for generating explainable counterexamples that are minimal, sound and complete. We demonstrate the usefulness of the proposed approach via a case study of warehouse robots planning.

\end{abstract}

\section{Introduction}\label{sec:intro}
Formal methods such as model checking~\cite{BK08} have recently been used to verify human-generated robotic mission plans against a set of requirements~\cite{Humphrey+2013}. 
In cases in which the plans may violate the requirements, such techniques generate \emph{counterexamples} that illustrate requirement violations and provide valuable diagnostic information~\cite{WJA+14,feng2016human}. 
Nevertheless, these artifacts may be too complex for humans to understand, because existing notions of counterexamples are defined as either a set of finite paths or an automaton typically with large number of states and transitions.
The objective of this paper is to generate \emph{explainable counterexamples} with structured language descriptions. Suppose that a robotic mission plan is captured by a Markov decision process (MDP). 
We formulate the main problem as finding an explainable counterexample that 
(1) is a counterexample in the MDP illustrating the requirement violation, 
and (2) can be described using a minimal set of structured language sentences out of predefined set of templates.

Typical requirements that can be verified on MDPs include safety properties, such as ``the maximum probability to reach an error state is at most 0.1''. 
A single path reaching an error state in the MDP may not suffice to illustrate the requirement violation (i.e., the probability to reach an error state exceeds the threshold 0.1). 
Instead, a more informative counterexample may contain \emph{a set of paths} all reaching the error state and carrying a total probability mass greater than the threshold.  
However, the number of such paths can be excessive in many cases (e.g., doubly exponential in the problem size~\cite{han2009counterexample}).
An alternative way to represent counterexamples is using \emph{critical subsystem} of an MDP, which is a subsystem of the original MDP such that the probability to reach an error state inside that exceeds the probability threshold~\cite{WJA+14}.
It is challenging to explain counterexamples represented either as a set of paths or a critical subsystem, even with a modest number of states.
In this paper, we propose a natural language medium for conveying the counterexamples so as to enhance their effectiveness for diagnostic purposes.

We formalize the notion of \emph{explanations} of counterexamples by defining the connection between explanations and MDP states and actions. 
Each counterexample explanation contains a set of structured language sentences, that are instantiated from predefined language templates using vocabularies relevant to the domain of interest.  
In this paper, we use warehouse robots as a running example, but the proposed approach also applies to other robotic applications in which similar vocabulary and templates can be established. 
We define the minimality of explanations in terms of the number of sentences.
We also define the soundness and completeness of counterexample explanations. 
Given an MDP encompassing a robotic mission plan, a desired requirement along with the probability threshold, and a set of domain-specific language templates, the main problem is how to compute minimal, sound and complete explainable counterexamples that illustrate the requirement violation in the MDP. 

The proposed solution is based on mixed-integer linear programming (MILP). 
The MILP objective is to minimize the number of sentences (i.e., the instantiations of structured language templates) in the explanation. The constraints encode the requirement violation (i.e., the reachability probability exceeds the threshold), the MDP transition relation and nondeterministic choices of actions, and the connection between sentences and MDP states and actions. 
The MILP results in a minimal set of sentences that should be included in the counterexample explanation, and a set of states to form a critical subsystem. 
Therefore, the approach seeks to search for a counterexample and its structured language explanation simultaneously. 
The set of sentences identified in the MILP are unordered. 
To make it easier to follow the explanation, we propose an algorithm to order these sentences based on the topological sorting of counterexample states. 
Finally, we demonstrate the usefulness of the approach on a case study of warehouse robots planning.

\startpara{Contributions}
We summarize the major contributions of this paper as follows:
\begin{enumerate}
\item A formalization of the notion of explainable counterexamples for MDPs, including definitions on the minimality, soundness and completeness of explanations.
\item An MILP approach to find explainable counterexamples with minimal, sound and complete explanations. 
\item A case study of warehouse robots planning to show the usefulness of the proposed approach.
\end{enumerate}

\startpara{Related Work}
Counterexample generation for model checking MDPs has been studied in several works using different representations of counterexamples:
\cite{han2009counterexample} computes the smallest number of paths in MDP whose joint probability mass exceeds the threshold and formulates the counterexample generation as a k-shortest path problem;
\cite{WJA+14} computes a critical subsystem of MDP with the minimal number of states and proposes solutions based on mixed-integer linear programming and SAT-modulo-theories. 
There are several attempts to generate human-readable counterexamples:
\cite{wimmer2013high} computes a minimal fragment of model description in some high-level modeling language (e.g., probabilistic guarded commands),
while \cite{feng2016human} computes structured probabilistic counterexamples as a sequence of ``plays'' that capture the high-level objectives in UAV mission planning. 
However, none of these approaches generates counterexamples that are interpretable by humans as easy as natural language explanations. 
This paper aims to fill this gap by presenting an approach to compute counterexamples for robotic planning explained in structured natural language. 

The study of natural language for robotic applications has mostly focused on translating human instructions expressed in natural language to robotic control commands. 
For example, \cite{hayes2017improving} considers the problem of synthesizing natural language descriptions of robotic control policy. In \cite{lignos2015provably}, the authors present an integrated system for synthesizing reactive controllers using natural language specifications which are translated into linear temporal logic formulas. If unsynthesizable, the minimal unsynthesizable core is returned as a subset of the natural language input specifications.
Nevertheless, the connection between counterexamples for MDP models and natural language explanations has not yet been explored.

\section{Problem Formulation} \label{sec:problem}
In this section, we provide the necessary background and the formal definition of the problem. 

\subsection{Counterexamples for MDPs}
Markov decision processes (MDPs) have been widely used as a modeling formalism in robot planning to represent abstract robotic mission plans \cite{Thrun05}. 
We denote an MDP as a tuple $\mathcal{M}=(S, \sinit, \act, P, L)$
where $S$ is a countable set of \emph{states},
$\sinit \in S$ is an \emph{initial state},
$\act$ is a finite set of \emph{actions},
$P: S \times \act \times S \to [0, 1] \subseteq \Rset$ is a \emph{transition relation}
such that $\forall s\in S, \; \forall \alpha \in \act: \sum_{s'\in S}P(s,\alpha,s')=1$,
and $L: S \to 2^{AP}$ is a \emph{labeling function} that assigns to each state the set of atomic propositions from $AP$ (the finite set of all atomic propositions) that hold true.
At each state $s$, first an action $\alpha \in \act$ is chosen nondeterministically, then a successor state $s'\in \mathsf{succ}(s,\alpha)$ is determined probabilistically based on $P(s,\alpha,s')$.
The nondeterministic choices of actions in an MDP are resolved by a \emph{strategy}, denoted by $\sigma: S \to \act$.
Model checking techniques~\cite{BK08} can be applied for automated verification of properties such as ``the probability of reaching error states is at most $\lambda$''. 
In case the property is violated, counterexamples can be generated as diagnostic feedback. 
Formally, a counterexample $\mathcal{M}^{c,\sigma}$ is a critical subsystem of the MDP $\mathcal{M}$ such that the probability to reach target states inside this subsystem under the strategy $\sigma$ exceeds the probability threshold $\lambda$ defined by a task.
This constraint satisfaction problem can be modeled as a mixed-integer linear program (MILP)~\cite{WJA+14}.

\begin{example}

\begin{figure}[t]
\centering
\includegraphics[width=.35\columnwidth]{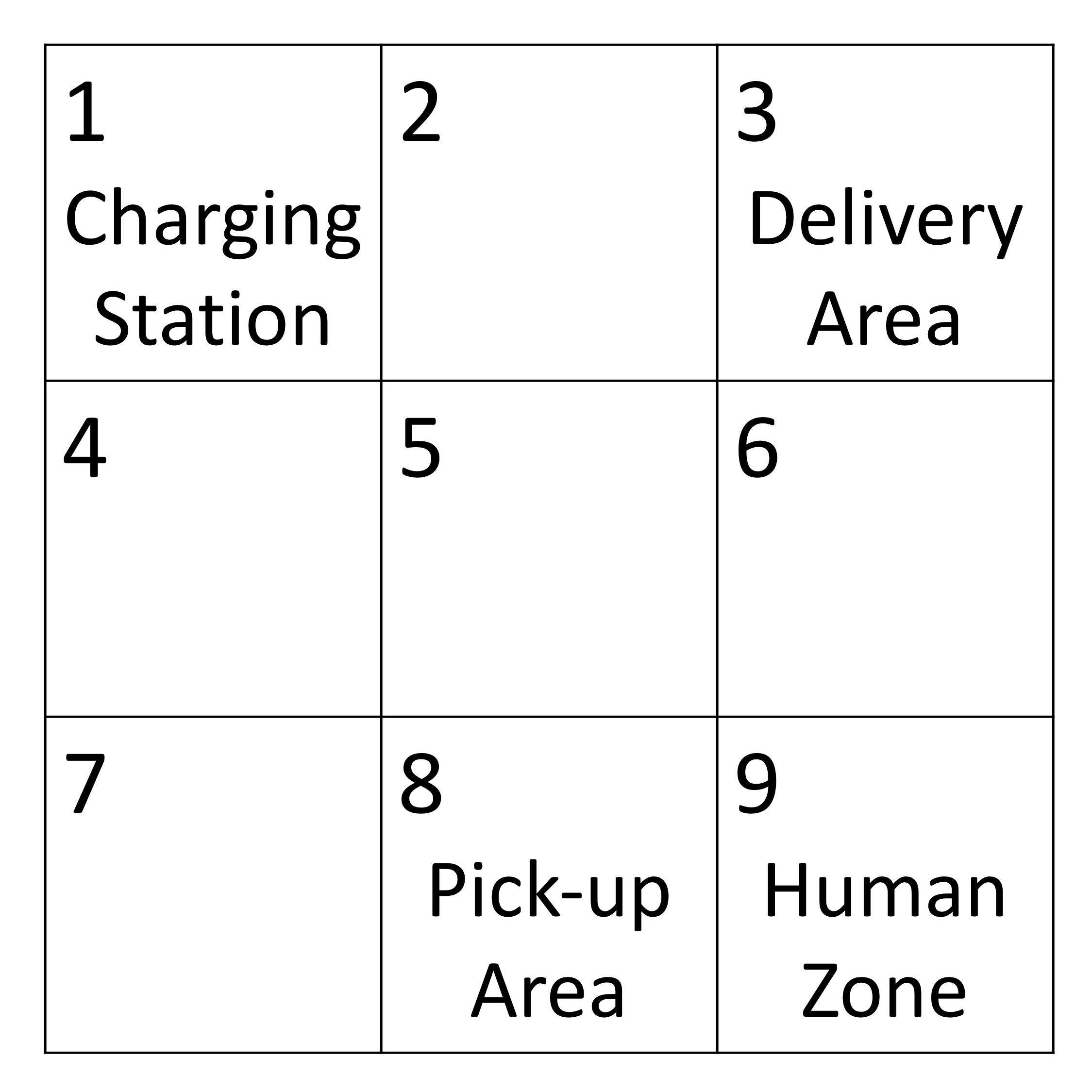}
\caption{A grid map for warehouse robots.}
\label{fig:grid}
\end{figure}

\begin{figure}[t]
\centering
\includegraphics[width=.95\columnwidth]{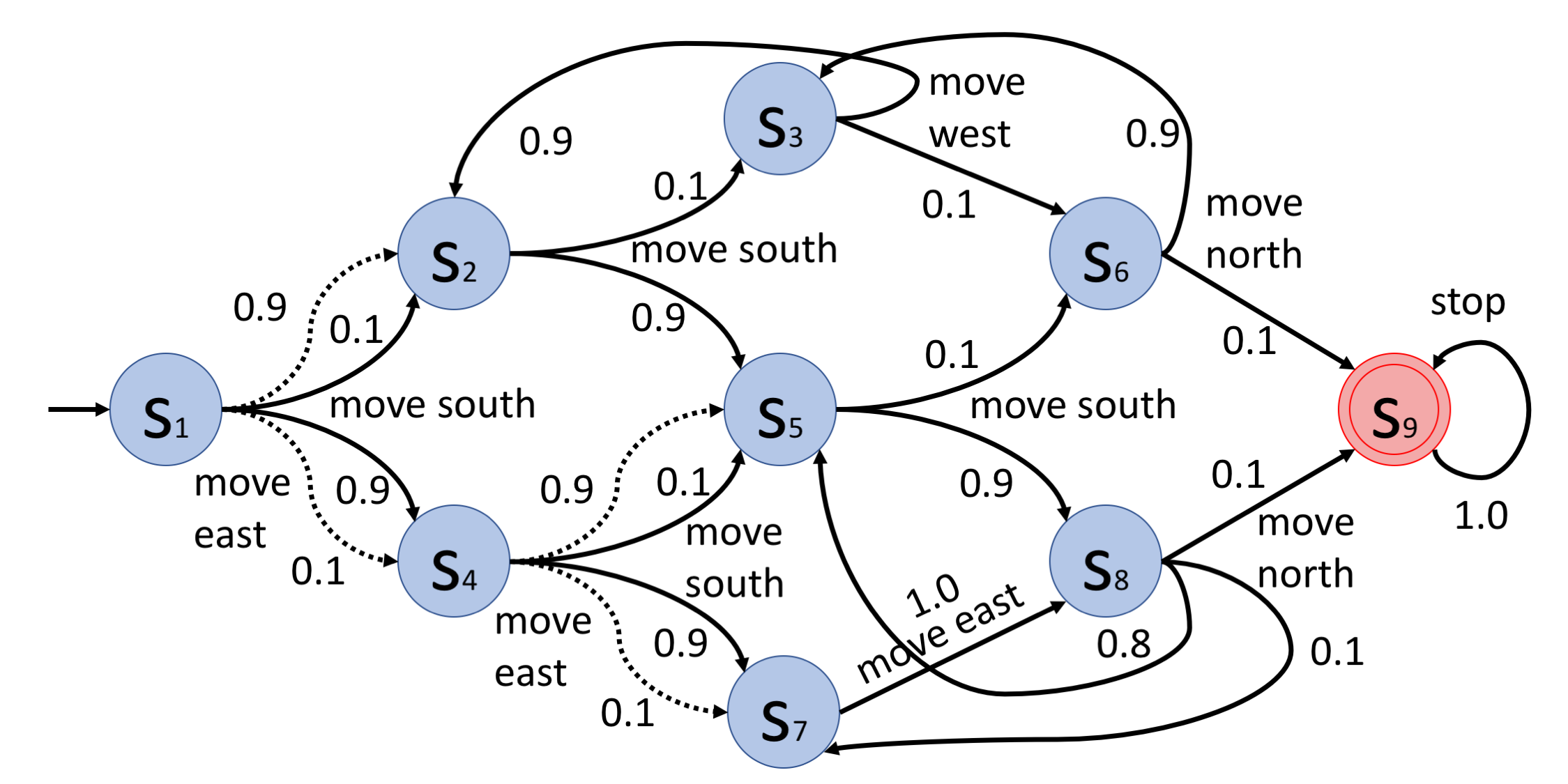}
\caption{An MDP representing a robotic mission plan based on \figref{fig:grid}.}
\label{fig:mdp}
\end{figure}

\begin{figure}[t]
\centering
\includegraphics[width=.95\columnwidth]{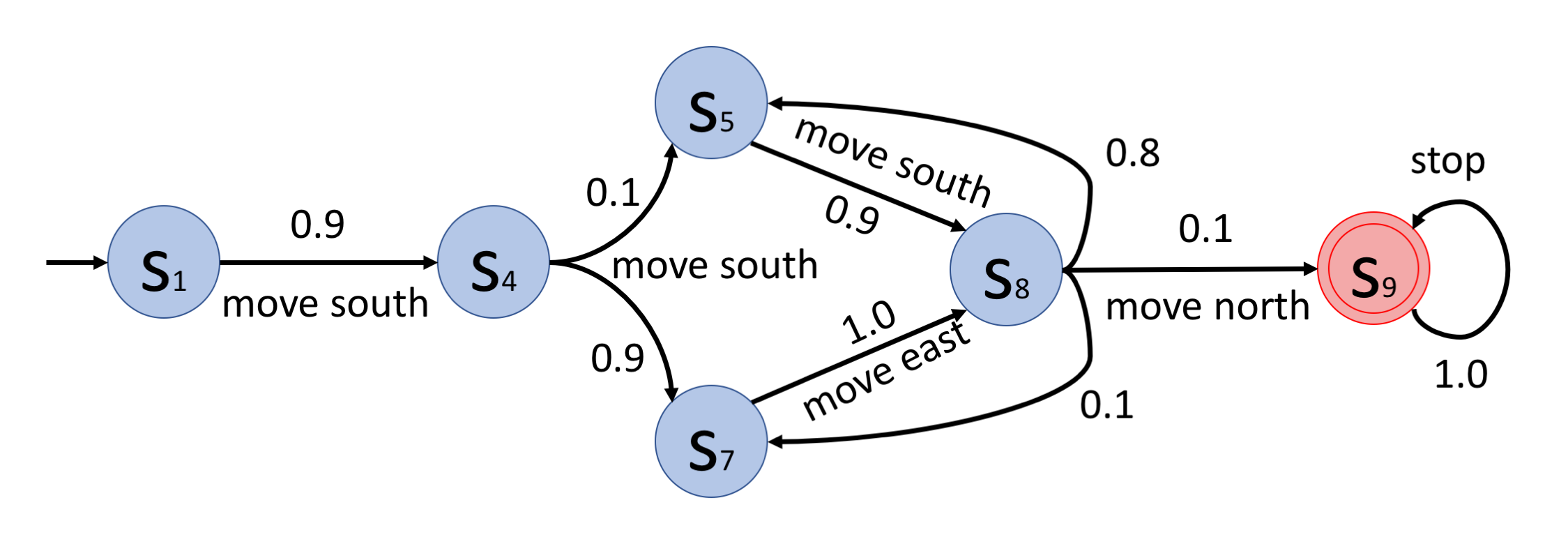}
\caption{A counterexample showing the violation of property ``the robot enters the human zone with probability at most 0.3''.}
\label{fig:cex}
\end{figure}

\figref{fig:grid} shows a grid map of a warehouse, in which the robot can take one of the five possible actions: ``move east'', ``move south'', ``move west'', ``move north'', and ``stop''.  
\figref{fig:mdp} shows an example MDP representing a robotic mission plan in this warehouse, where each state corresponds to a cell.  
Starting from the initial state $s_1$, the robot has a nondeterministic choice of ``move east'' (dashed lines) or ``move south'' (solid lines). 
Suppose the robot chooses to move east, then it arrives at the east neighbor $s_2$ with probability 0.9 and arrives at the south neighbor $s_4$ with probability 0.1 (e.g., due to perception uncertainties).
Each MDP state is labeled with one or more atomic propositions representing the robot's (relative) position. For example, $s_1$ is labeled with ``in charging station'' and ``north of pick-up area''.
The robot's mission objective is to first go to the pick-up area ($s_8$) to pick parts and then deliver parts to the delivery area ($s_3$). The robot stops if it enters the human zone ($s_9$). Consider the property ``the robot enters the human zone with probability at most 0.3''. \figref{fig:cex} shows a counterexample for this property, which is a critical subsystem with 6 states. 

Notice that, although the number of states in this subsystem is minimal in computation~\cite{WJA+14}, it is still hard to interpret.
In the following, we introduce structured language templates that improve the interpretability of such subsystems.

\label{eg:mdp}
\end{example}

\subsection{Structured Language Template}

A structured language template represents the core sentence structure over an ordered set of action and proposition pairs and can be used to facilitate the sentence generation process. 
We consider the following language templates for generating sentences to describe robotic behavior:
$$
\mbox{The robot }{\langle \mbox{action} \rangle}
\mbox{ when } {\langle \mbox{proposition} \rangle}.
$$
The above template can be instantiated by replacing ${\langle \mbox{action} \rangle}$ with possible robotic actions, and replacing ${\langle \mbox{proposition} \rangle}$ with atomic propositions or a conjunction of atomic propositions representing the robot's configuration. 
Therefore, a template can be viewed as a function that takes a set of actions and propositions as the input and outputs the corresponding sentence, denoted by $\mathcal{T}: (\alpha,\{\iota\}) \to Sen$ where $\alpha \in \act$ is an action, $\{\iota\} \subseteq AP$ is a set of atomic propositions, and $Sen$ is a sentence.
Given a tuple $(\alpha,\{\iota\})$, we can instantiate a corresponding sentence using the structured language template $\mathcal{T}$, which describes the robotic behavior of taking action $\alpha$ when under certain conditions captured by the set of atomic propositions $\{\iota\}$.

In this paper, we only consider one template as described above. Templates in other forms can be used through a similar procedure. In practice, the templates, actions and propositions are domain-specific. For concreteness, we will follow the warehouse example for the rest of the paper.  

\begin{example}
For the robotic mission in \egref{eg:mdp}, we can 
generate a set of sentences by instantiating the structured language template with the given MDP actions and propositions. The followings are a few example sentences that describe the robotic behavior captured in the MDP model.
\begin{itemize}
\item The robot \emph{moves east} when \emph{in charging station}. 
\item The robot \emph{moves south} when \emph{north of pick-up area}. 
\item The robot \emph{moves north} when \emph{south of delivery area} and \emph{north of human zone}.
\end{itemize}

\label{eg:nlp}
\end{example}

\subsection{Problem Statement}
We seek to compute succinct \emph{explainable counterexamples} for MDPs 
with \emph{explanations} in structured language to describe robotic behaviors that lead to requirement violation. 
In the following, we define the soundness, completeness and minimality of such counterexample explanations precisely. 
Given an MDP $\mathcal{M}=(S, \sinit, \act, P, L)$, a requirement $\phi$, and a structured language template $\mathcal{T}: (\alpha,\{\iota\}) \to Sen$, 
we say that a counterexample explanation $C \subseteq \{Sen\}$ is \emph{sound} if each sentence $c \in C$ corresponds to the robotic behavior in one or more states $s$ of a counterexample $\mathcal{M}^{c,\sigma}$ illustrating the violation of $\phi$ in $\mathcal{M}$;
that is, if $c=\mathcal{T}(\alpha, \{\iota\})$, then $\alpha \in \act(s)$ and $\{\iota\} \subseteq L(s)$.
A counterexample explanation $C$ is \emph{complete} if the robotic behavior at each state $s$ in a counterexample $\mathcal{M}^{c,\sigma}$ is described using exactly one sentence $c \in C$, which is an instantiation of $\mathcal{T}$ with the action $\sigma(s)$ and a set of atomic propositions $\{\iota\} \subseteq L(s)$.
We define the \emph{minimality} of counterexample explanations in terms of $|C|$, i.e.,  the number of sentences.

Note that a counterexample explanation describes an over-approximation of the robotic behavior captured in a critical subsystem. 
Each sentence $c=\mathcal{T}(\alpha, \{\iota\})$ corresponds to a robotic action $\alpha$ taken at a state (or multiple states) $s$ with $\alpha \in \act(s)$ and $\{\iota\} \subseteq L(s)$. 
However, it is not necessary that action $\alpha$ is taken at 
all MDP states labeled with propositions $\{\iota\}$. 
For example, suppose a sentence ``the robot moves south when north of pick-up area'' explains the robotic behavior in one state in the counterexample, but the robot may not take the same action ``move south'' at every MDP state labeled with the proposition ``north of pick-up area''.

\begin{table}[t]
\centering
\caption{An explanation of the counterexample in \figref{fig:cex}}
\begin{tabular}{|l|}
\hline
($\mathsf{S1}$) The robot moves south when in charging station. \\
($\mathsf{S2}$) The robot moves south when south of charging station. \\
($\mathsf{S3}$) The robot moves south when north of pick-up area. \\ 
($\mathsf{S4}$) The robot moves east when west of pick-up area. \\ 
($\mathsf{S5}$) The robot moves north when in pick-up area. \\ 
($\mathsf{S6}$) The robot stops when in human zone. \\ 
\hline
\end{tabular}
\label{tab:cex}
\end{table}

\begin{example}
\tabref{tab:cex} shows a sound and complete explanation for the counterexample shown in \figref{fig:cex}. It contains 6 sentences, each of which explains the robotic behavior in one state of the counterexample. For example, sentence ($\mathsf{S1}$) corresponds to state $s_1$, which is labeled with a proposition ``in charging station'' and takes the action ``move south''. Similarly, sentences ($\mathsf{S2}$)-($\mathsf{S6}$) correspond to the behavior at states $s_4$, $s_5$, $s_7$, $s_8$ and $s_9$, respectively.
However, this is not a minimal explanation. We will show how to automatically generate a minimal counterexample explanation using only 4 sentences in \egref{eg:minimal}.

\label{eg:cex}
\end{example}

We now state the problem statement formally.
\begin{problem}
Given an MDP $\mathcal{M}$, a reachability requirement $\phi$ with probability threshold $\lambda$ (violated in $\mathcal{M}$), and a structured language template $\mathcal{T}$, compute a minimal counterexample explanation that is sound and complete.
\label{problem1}
\end{problem}

\section{Solution Approach} \label{sec:approach}
We propose a solution based on mixed-integer linear programming (MILP) to compute minimal, sound and complete counterexample explanations. 
An advantage of the proposed approach is that it computes a counterexample and generates its structured language explanation simultaneously. 

\subsection{MILP Formulation}

We define a real-valued variable $p_s \in [0,1] \subseteq \Rset$ for each state $s \in S$ to track the probability of reaching the target states, denoted by set $T$, within a counterexample subsystem.
To encode the strategy $\sigma$ for resolving the nondeterminism in MDP, we define a binary variable $\theta_{s,\alpha}$ for each state $s \in S$ and action $\alpha \in \act(s)$ enabled in state $s$, 
such that $\theta_{s,\alpha}=1$ if action $\alpha$ is chosen at state $s$ by the strategy $\sigma$. 
Additionally, we use a binary variable $\mu_{\alpha,\{\iota\}}$ to denote if a corresponding sentence instantiated by the structured language template $\mathcal{T}$ with the tuple $(\alpha, \{\iota\})$ is included in the counterexample explanation. 
Therefore, the number of real-valued variables used in the MILP formulation is given by the number of MDP states, and the number of binary variables is the summation of the number of MDP nondeterministic choices and the number of possible structured language sentences.

The resulting MILP problem is:
\begin{subequations}
\begin{align}
\minimize_{p_s\in [0,1], \theta_{s,\alpha}\in \{0,1\}, \mu_{\alpha,\{\iota\}}\in \{0,1\} } & \ \sum_{\mathcal{T}} \mu_{\alpha,\{\iota\}} 
\label{eq:obj} \\
\intertext{\hspace{70pt} subject to}
 & \ \ p_{\sinit} > \lambda,  
\label{eq:c1} \\
\forall s \in T: & \ \ p_s = 1, 
\label{eq:c2} \\
\forall s \in S \setminus T,  \ \forall \alpha \in \act(s): & \ \ \nonumber
p_s \le   (1 - \theta_{s,\alpha}) \ + \\ &\hspace{-4mm} \sum_{s' \in \mathsf{succ}(s,\alpha)} P(s,\alpha,s') \cdot p_{s'},   
\label{eq:c3} \\
\forall s \in S \setminus T: & \ \ p_s \le \sum_{\alpha \in \act(s)} \theta_{s,\alpha}, 
\label{eq:c4} \\
\forall s \in S \setminus T,  \forall \alpha \in \act(s): & \ \ 
\theta_{s,\alpha} \le \sum_{\{\iota\} \subseteq L(s)}{\mu_{\alpha, \{\iota\}}}. 
\label{eq:c5} 
\end{align}
\end{subequations}

\noindent The objective function (\ref{eq:obj}) is to minimize the number of sentences in the counterexample explanation. Each sentence is an instantiation of the structured language template $\mathcal{T}$ with the tuple $(\alpha, \{\iota\})$.
The constraint (\ref{eq:c1}) guarantees that the probability of reaching target states from the initial state $\sinit$ exceeds the threshold $\lambda$ of the property $\phi$, therefore a critical subsystem that violates the reachability probability in MDP $\mathcal{M}$ is found. 
The constraint (\ref{eq:c2}) ensures that the probability $p_s$ of a target state $s \in T$ is 1.
The constraint (\ref{eq:c3}) encodes the probabilistic transition relation of the MDP. 
The constraint (\ref{eq:c4}) ensures that one action is chosen in each state by the strategy.
Finally, the constraint (\ref{eq:c5}) requires that if an action is chosen in a state, then at least one sentence has to be instantiated to explain this behavior.

\begin{theorem}
The solution to the MILP (\ref{eq:obj})-(\ref{eq:c5}) solves Problem~\ref{problem1}.

\end{theorem}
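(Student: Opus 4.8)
The plan is to show the equivalence in two directions: first, that any feasible solution of the MILP yields a sound and complete counterexample explanation $C$, and second, that this $C$ is minimal, i.e., no sound and complete explanation has fewer sentences. Throughout, the bridge between the MILP variables and the objects in Problem~\ref{problem1} is: the binary variables $\theta_{s,\alpha}$ encode a (partial) strategy $\sigma$, the set of states with $p_s > 0$ together with $\sigma$ defines the critical subsystem $\mathcal{M}^{c,\sigma}$, and the binary variables $\mu_{\alpha,\{\iota\}}$ that equal $1$ index the sentences in $C = \{\,\mathcal{T}(\alpha,\{\iota\}) : \mu_{\alpha,\{\iota\}} = 1\,\}$.

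First I would verify that a feasible point gives a genuine counterexample. By constraint~(\ref{eq:c4}), at every non-target state $s$ with $p_s > 0$ at least one action has $\theta_{s,\alpha} = 1$; picking one such action per state defines $\sigma$. Constraints~(\ref{eq:c2}) and~(\ref{eq:c3}) then force the $p_s$-values to under-approximate the true reachability probabilities of $T$ in the induced DTMC restricted to the chosen subsystem: when $\theta_{s,\alpha}=1$, (\ref{eq:c3}) reads $p_s \le \sum_{s'} P(s,\alpha,s')\,p_{s'}$, which is exactly the (sub-)fixed-point inequality for reachability; a standard argument (monotonicity of the reachability operator, or the fact that the least fixed point dominates any sub-fixed point) shows $p_{\sinit}$ is at most the actual reach-probability under $\sigma$. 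Combined with~(\ref{eq:c1}), $p_{\sinit} > \lambda$, this certifies that the subsystem under $\sigma$ violates $\phi$, so it is a counterexample $\mathcal{M}^{c,\sigma}$ in the sense defined in Section~\ref{sec:problem}. Soundness of $C$ is immediate from~(\ref{eq:c5}): whenever $\theta_{s,\alpha}=1$, there is some $\{\iota\}\subseteq L(s)$ with $\mu_{\alpha,\{\iota\}}=1$, so the corresponding sentence $\mathcal{T}(\alpha,\{\iota\})$ has $\alpha \in \act(s)$ and $\{\iota\}\subseteq L(s)$ as required. Completeness likewise follows: every state $s$ in the subsystem has its chosen action $\sigma(s)$, and~(\ref{eq:c5}) guarantees a matching sentence is in $C$, so each state is described by at least one sentence of $C$ — and by choosing, per state, a single witnessing $\{\iota\}$, exactly one.

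Next I would argue minimality. Conversely, suppose $C'$ is any sound and complete explanation of some counterexample $\mathcal{M}^{c,\sigma'}$. I construct a feasible MILP point from it: set $\theta_{s,\sigma'(s)} = 1$ on the subsystem states and $0$ elsewhere, set $p_s$ to the exact reach-probabilities under $\sigma'$ (these satisfy (\ref{eq:c1})–(\ref{eq:c4}) since $\mathcal{M}^{c,\sigma'}$ is a counterexample), and set $\mu_{\alpha,\{\iota\}} = 1$ exactly for the sentences in $C'$. Soundness and completeness of $C'$ are precisely what make constraint~(\ref{eq:c5}) hold at this point. Hence the objective value $\sum_{\mathcal{T}} \mu_{\alpha,\{\iota\}} = |C'|$ is attained by a feasible point, so the MILP optimum is at most $|C'|$; since this holds for every sound and complete $C'$, the $C$ returned by an optimal solution has $|C|$ minimal, which is exactly the minimality criterion of Problem~\ref{problem1}.

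**The main obstacle** I anticipate is the reachability-probability correspondence in constraint~(\ref{eq:c3}): one must argue carefully that the $p_s$ variables — which are only constrained by inequalities and only for the chosen action — correctly encode "the probability to reach $T$ inside the subsystem exceeds $\lambda$" rather than something weaker. The subtlety is two-sided. For the forward direction, a feasible $p_{\sinit}>\lambda$ only certifies a violation because $p_{\sinit}$ lower-bounds the true probability (any sub-fixed point of the reachability operator lies below its least fixed point); this needs the standard fixed-point/monotonicity lemma for reachability in DTMCs, and one must also ensure the subsystem is well-defined (e.g. that states with $p_s = 0$ can be consistently dropped, using~(\ref{eq:c4}) to see no such state is "needed"). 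For the backward direction, one must check that plugging in the \emph{exact} probabilities under $\sigma'$ indeed satisfies the inequality~(\ref{eq:c3}) with equality on chosen actions and is vacuous ($p_s \le 1 + \cdots$) on unchosen ones — routine, but worth stating. Everything else (soundness/completeness $\Leftrightarrow$ constraint~(\ref{eq:c5}), the objective counting $|C|$) is essentially a direct unfolding of the definitions in Section~\ref{sec:problem}.
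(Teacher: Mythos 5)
Your overall architecture is good, and in two places it is actually more careful than the paper's own sketch: you verify that a feasible point really certifies a violation (via the sub-fixed-point argument that $p_{\sinit}$ lower-bounds the true reachability probability under the induced strategy), and your minimality argument is two-sided — you show that \emph{every} sound and complete explanation $C'$ induces a feasible MILP point with objective value $|C'|$, so the optimum is a lower bound over all such explanations, not merely over MILP-representable ones. The paper's appendix proof omits both of these steps and simply asserts minimality from the form of the objective.

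However, there is a genuine gap in your soundness argument: you claim soundness is ``immediate from (\ref{eq:c5})'' because whenever $\theta_{s,\alpha}=1$ some $\mu_{\alpha,\{\iota\}}$ with $\{\iota\}\subseteq L(s)$ equals $1$. That is the wrong direction of implication — it establishes that every chosen action is covered by some sentence (which is the \emph{completeness} half), not that every sentence in $C=\{\mathcal{T}(\alpha,\{\iota\}):\mu_{\alpha,\{\iota\}}=1\}$ describes some state of the counterexample. Constraint (\ref{eq:c5}) is only a lower bound on $\sum\mu$, so a feasible point may set arbitrary additional $\mu$-variables to $1$ that correspond to no state with $p_s>0$; soundness therefore fails for general feasible points and can only be recovered at an \emph{optimal} point, by arguing that any such spurious $\mu_{\alpha,\{\iota\}}=1$ (and any spurious $\theta_{s,\alpha}=1$ at a state with $p_s=0$ that forces it) could be set to $0$ without violating (\ref{eq:c1})--(\ref{eq:c5}), strictly decreasing the objective — a contradiction with optimality. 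This is precisely how the paper argues soundness (by contradiction with the minimizing objective), and your proof needs that step; as written, attributing soundness to (\ref{eq:c5}) alone is incorrect. The rest of the argument stands once this is repaired.
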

(The proof of the theorem is given in the appendix.)
\begin{example}
Consider the MDP shown in \figref{fig:mdp} and the property ``the robot enters the human zone with probability at most 0.3''. 
We write the MILP formulation to compute an explainable counterexample with the minimal, sound and complete explanation.
We introduce 9 real-valued variables $p_{s_1},\dots,p_{s_9}$ to represent the probability of reaching the target state $s_9$ from each state in the MDP. We use 11 binary variable $\theta_{s,\alpha}$ to encode the nondeterministic choices of actions in each state. For example, $\theta_{s_1,\alpha_1}$ and $\theta_{s_1,\alpha_2}$ represent the choices of ``move east'' and ``move south'' in state $s_1$, respectively. 
Suppose that we only consider sentences instantiated from the structured language template $\mathcal{T}$ with a single atomic proposition. We use 60 binary variables $\mu_{\alpha, \iota}$ to represent all possible template instances given by the 5 actions and 12 propositions. 
The MILP minimizes
$$
\sum_{\alpha \in \act, \iota \in AP} \mu_{\alpha,\{\iota\}}.
$$
Since the initial state is $s_1$ and the probability threshold for the property is 0.3, we write the constraint (\ref{eq:c1}) as
$
p_{s_1} > 0.3.
$
The target state is $s_9$, thus the constraint (\ref{eq:c2}) is
$
p_{s_9} = 1.
$
The constraint (\ref{eq:c3}) encodes the MDP transition relations, for instance, the encoding for $s_1$ is as follows:  
\begin{equation*}
\begin{aligned}
p_{s_1} &\le (1-\theta_{s_1,\alpha_1}) + 0.9 p_{s_2} + 0.1 p_{s_4}, \mbox{ and} \\
p_{s_1} &\le (1-\theta_{s_1,\alpha_2}) + 0.1 p_{s_2} + 0.9 p_{s_4}. 
\end{aligned}
\end{equation*}
The constraint (\ref{eq:c4}) for encoding the nondeterministic choices in $s_1$ is
$$
p_{s_1} \le \theta_{s_1,\alpha_1} + \theta_{s_1,\alpha_2}.
$$
The constraints for other states can be written similarly.
We only give an example encoding of the constraint (\ref{eq:c5}) as follows.
Suppose state $s_1$ is labeled with two atomic propositions: $\iota_1$ - ``in charging station'' and $\iota_2$ - ``north of pick-up area''.
We encode the constraint (\ref{eq:c5}) for state $s_1$ as
\begin{equation*}
\begin{aligned}
\theta_{s_1,\alpha_1} &\le \mu_{\alpha_1, \{\iota_1\}} + \mu_{\alpha_1, \{\iota_2\}}, \mbox{ and} \\
\theta_{s_1,\alpha_2} &\le \mu_{\alpha_2, \{\iota_1\}} + \mu_{\alpha_2, \{\iota_2\}}.
\end{aligned}
\end{equation*}

Solving the resulting MILP for this example yields a counterexample as shown in \figref{fig:cex}, and a minimal explanation with 4 sentences, 
which are ($\mathsf{S3}$)-($\mathsf{S6}$) shown in \tabref{tab:cex}.
Sentence ($\mathsf{S3}$) describes the robot's behavior at states $s_1$, $s_4$ and $s_5$, because they take the same action ``move south'' and have a common proposition ``north of pick-up area''.
Sentences ($\mathsf{S4}$), ($\mathsf{S5}$), and ($\mathsf{S6}$) correspond to the robot's behavior at states $s_7$, $s_8$, and $s_9$, respectively.
Notice that, compared with \egref{eg:cex}, the generated explanation uses fewer sentences. Sentences ($\mathsf{S1}$) and ($\mathsf{S2}$) are used in \egref{eg:cex} to represent the robotic behavior in states $s_1$ and $s_4$, but they are omitted in the explanation resulting from MILP since ($\mathsf{S3}$) also captures the behavior of those two states.

\label{eg:minimal}
\end{example}

\subsection{Sentence Instantiation and Ordering}

\begin{algorithm}[t]
\caption{Instantiate and order explanation sentences}
\label{alg:order}
\begin{algorithmic}
    \REQUIRE An MDP $\mathcal{M}$, a structured language template $\mathcal{T}$, and MILP results $\{p_{s}>0, \ \mu_{\alpha,\{\iota\}}=1, \ \theta_{s,\alpha}=1\}$.
    \ENSURE An ordered list of sentences $C$.
	\STATE $C$ $\leftarrow$ Empty list;
    \STATE $N$ $\leftarrow$ $\{\sinit\}$;
    \WHILE{$N$ is non-empty}
    \STATE remove a state $s$ from the head of $N$;
    \STATE find the action $\alpha \in \act(s)$ such that $\theta_{s,\alpha}=1$;
    \STATE given $\alpha$, find $\mu_{\alpha,\{\iota\}}=1$ such that $\{\iota\} \subseteq L(s)$;
    \STATE instantiate a sentence $Sen \ \leftarrow \ \mathcal{T}(\alpha, \{\iota\})$;
    \IF{$C$ does not already contain $Sen$}
    \STATE add $Sen$ to the tail of $C$
    \ENDIF 
    \FORALL{state $s'$ with $p_{s'}>0$ and $P(s,\alpha,s')>0$} 
    \STATE insert $s'$ to the tail of $N$
    \ENDFOR
    \ENDWHILE
    \RETURN $C$
\end{algorithmic}
\end{algorithm}

The results of the MILP yield a set of tuples $(\alpha,\{\iota\})$ with the binary variables $\mu_{\alpha,\{\iota\}}=1$.
Each tuple corresponds to a sentence $\mathcal{T}(\alpha, \{\iota\})$ instantiated using the structured language template $\mathcal{T}$. 
However, the MILP formulation does not determine the ordering of these sentences. 
To help humans better understand the counterexample explanations, we need to present these sentences in a meaningful order. 
Algorithm~\ref{alg:order} presents a procedure of instantiating and ordering explanation sentences, based on the topological sorting of MDP states. 
We demonstrate the usage of this algorithm via the following example.
 
\begin{example}
The MILP results of \egref{eg:minimal} yield 6 counterexample states ($s_1$, $s_4$, $s_5$, $s_7$, $s_8$, $s_9$) and 4 sentences ($\mathsf{S3}$)-($\mathsf{S6}$). 
Starting from the initial state $s_1$, we identify sentence ($\mathsf{S3}$) from the MILP results to describe the robotic behavior at $s_1$. Therefore, we add ($\mathsf{S3}$) to the ordered list. We remove $s_1$ from the set $N$ and add its successor state $s_4$ to $N$. Next, we consider state $s_4$ and find that only sentence ($\mathsf{S3}$) describes its behavior. Since ($\mathsf{S3}$) is already in the list, we continue to remove $s_4$ from the set $N$ and add its successor states $s_5$ and $s_7$ to $N$.
The algorithm continues until it processes all states in the critical subsystem. After running this procedure, the ordering of sentences in this counterexample explanation is obtained as 
$(\mathsf{S3}) \rightarrow (\mathsf{S4}) \rightarrow (\mathsf{S5}) \rightarrow (\mathsf{S6})$.

\label{eg:sorting}
\end{example}

\section{Experimental Evaluation} \label{sec:exp}
\begin{table*}[!htbp]\centering\scriptsize

\caption{Experimental results for the warehouse robot planning}
\ra{1}
\begin{tabular*}{1\linewidth}{cccccccccccccccc@{}}
\toprule
& \phantom{00} & \multicolumn{2}{c}{MDP Size} & \phantom{00} & \multicolumn{4}{c}{Approach in~\cite{WJA+14}} & \phantom{00} & \multicolumn{5}{c}{Proposed Approach} \\
\cmidrule{3-4} \cmidrule{6-9} \cmidrule{11-15} 
$N$
&& \# States & \# Transitions 
&& \begin{tabular}{@{}c@{}}\# Binary\\ Variables\end{tabular} & \begin{tabular}{@{}c@{}}\# Real\\ Variables\end{tabular}  & \# States & \begin{tabular}{@{}c@{}} Time (s) \end{tabular} 
&& \begin{tabular}{@{}c@{}}\# Binary\\ Variables\end{tabular} & \begin{tabular}{@{}c@{}}\# Real\\ Variables\end{tabular}  & \# States & \# Sentences & \begin{tabular}{@{}c@{}} Time (s) \end{tabular} \\ 
\midrule
10 && 100 & 208 && 309 & 56 & 9 & 0.11 && 242 & 56 & 9 & 3 & 1.39  \\
20 && 400 & 788 && 1,369 & 156 & 19 & \textbf{24.86} && 1,022 & 156 & 39 & 3 & \textbf{4.43}  \\
30 && 900 & 1,768 && 2,652 & 900 & -- & \textbf{time-out} && 2,273 & 291 & 29 & 3 & \textbf{1.54}   \\
40 && 1,600 & 3,148 && 4,732  & 1,600  & -- & \textbf{time-out} && 4,231 & 506 & 79 & 3 & \textbf{17.45}  \\
50 && 2,500 & 4,928 && 7,412 & 2,500 & -- & \textbf{time-out} && 6,669 & 756 & 99 & 3 & \textbf{32.33}  \\

\bottomrule
\end{tabular*}
\label{tab:results}
\end{table*}

In this section, we first introduce a case study of warehouse robot planning, then discuss the results of applying the proposed approach to the case study.


\begin{figure}[!t]
\centering
\includegraphics[width=.9\columnwidth]{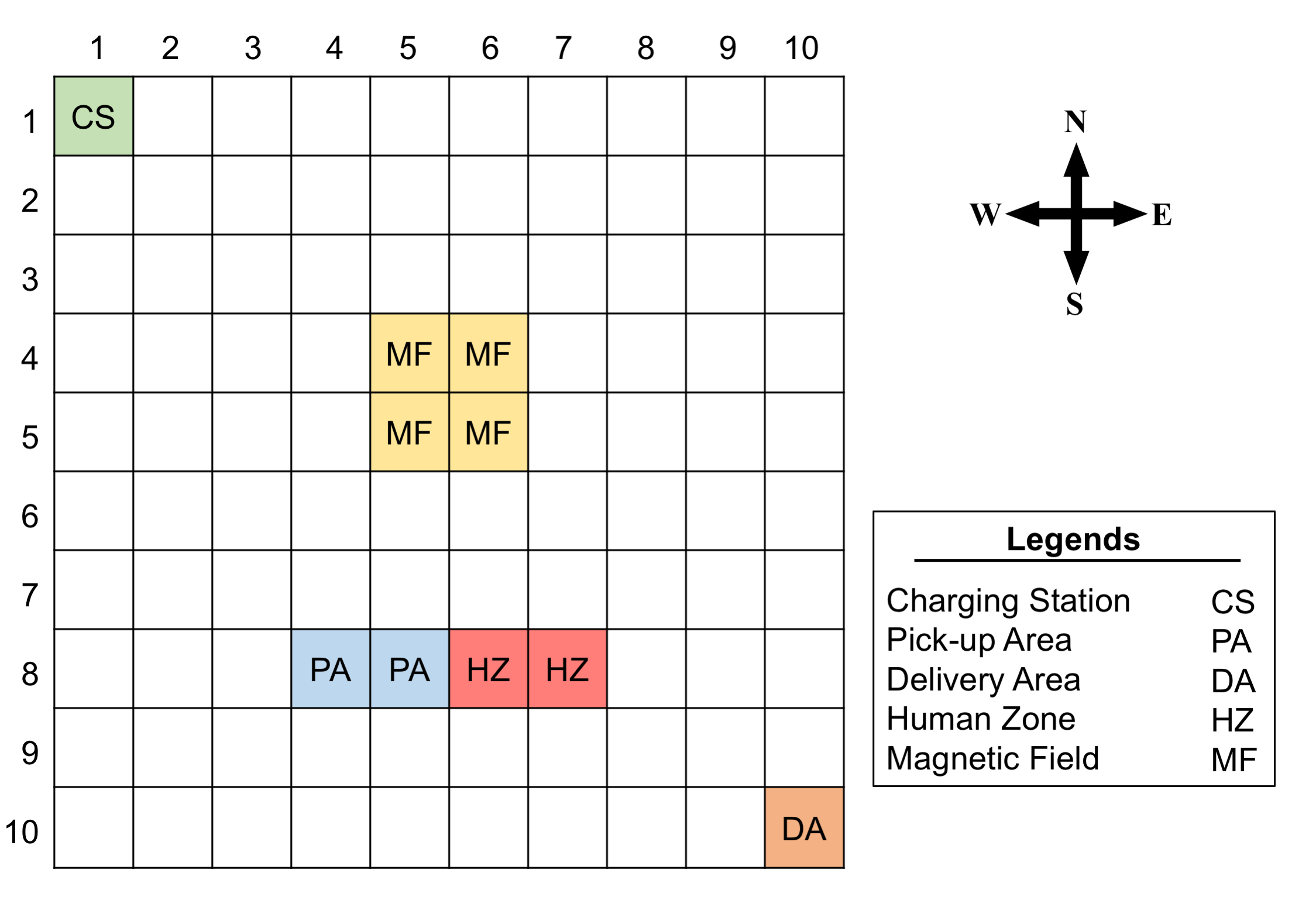}
\caption{A warehouse map with a $10\times10$ grid map.}
\label{fig:grid-world}
\end{figure}

The case study considers a warehouse map with an $N \times N$ grid map.  
\figref{fig:grid-world} shows an example map when $N=10$.
The map is annotated with a set of locations: charging station, pick-up area, delivery area, human zone, and magnetic field. 
The map scales up as $N$ increases, but the layout of locations does not change.  
For each map, we construct an MDP model to represent the robotic mission plan of picking parts from the pick-up area and transferring to the delivery area. Each state in the MDP corresponds to a cell in the map and is labeled with a set of atomic propositions, representing the robot's location. Example of such atomic propositions include ``in charging station'' and ``north of delivery area''. The robot may take one of the five possible actions: ``move east'', ``move south'', ``move west'', ``move north'', and ``stop''. At each MDP state, the robot makes a nondeterministic choice of action. These actions are executed correctly at most of the time. 
However, the accuracy of robot's sensors is affected in the magnetic field and it introduces uncertainty in robot's motion. Thus, with some known probability, the robot would execute the action incorrectly. For example, suppose the robot decides to move south, it would reach the south neighbor with probability 0.9, and reach the east neighbor with probability 0.1 due to inaccurate sensor readings in the area with magnetic field.  


We implemented the proposed approach using the PRISM model checker~\cite{KNP11} and Gurobi~\cite{gurobi} optimization toolbox.
For comparison, we also implemented the approach in~\cite{WJA+14} that computes a counterexample with the minimal number of states. 
\tabref{tab:results} summarizes the experimental results of the case study with five different maps ($N$ ranges from 10 to 50, by increment of 10).
Consider the safety property ``the robot enters the human zone with probability at most 0.1''.
For each scenario, we report the number of states and transitions of the MDP model, the number of states in the computed counterexample, and the number of sentences in the generated counterexample explanation. We also report the number of binary and real variables, and the running time for computing the MILP solution. 
The experiments were run on a laptop with 2.0 GHz Intel Core i7-4510U CPU and with 8.00 GB RAM. We impose a time-out of 1 hour. 
All MDP models and MILP formulations are available online.\footnote{\url{https://goo.gl/RVXXVN}}


In all cases, the proposed approach successfully computes a counterexample together with a succinct explanation in structured language. As the map scales up, the number of states in a counterexample also increases. It is interesting to note that the number of sentences in the generated explanations are the same, although, they use different sentences. 

By contrast, previous approaches (e.g., the approach in~\cite{WJA+14}) do not allow the automated generation of counterexample explanations. 
It is nontrivial to interpret counterexamples represented as subsystems.
In addition, the proposed approach has a better performance compared to the approach in~\cite{WJA+14} in the sense that it uses fewer number of binary and real variables in the MILP solution. Consequently, the running time to solve the optimization problem becomes much faster. 
\tabref{tab:results} also shows a growing size of MDP models as the map size ($N$) increases. 
As a result, the number of variables in MILP encoding increases in both methods. Nevertheless, for all five cases, counterexamples and explanations are computed by the proposed approach in less than 1 minute.
Therefore, we expect that the proposed approach would scale well for larger models as well.

\section{Conclusion} \label{sec:conclu}
In this paper, we proposed an approach to compute explainable counterexamples for robotic planning problems that are modeled as MDPs. The generated counterexample explanations are expressed as a minimal set of structured language sentences, which have the potential advantage of providing diagnostic feedback to humans. We also performed the experimental evaluation on several different warehouse robotic plans, with the MDP model size ranging from hundred to thousands of states. 
The results are very encouraging: in each of these cases, a succinct set of structured language sentences are automatically computed to show robotic behaviors that lead to the requirement violation. 
Given the promising computational properties of the proposed approach, in the future, we will investigate extensions to more expressive requirements and more flexible language templates that are closer to natural language.

\appendix
\setcounter{theorem}{0}
 \begin{theorem}\label{milp}
The solution to the MILP (\ref{eq:obj})-(\ref{eq:c5}) solves Problem~\ref{problem1}.
 \end{theorem}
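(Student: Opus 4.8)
The plan is to prove the theorem by exhibiting a correspondence between feasible points of the MILP (\ref{eq:obj})--(\ref{eq:c5}) and pairs $(\mathcal{M}^{c,\sigma},C)$, where $\mathcal{M}^{c,\sigma}$ is a counterexample for $\phi$ in $\mathcal{M}$ and $C$ is a sound and complete explanation of it, arranged so that the objective value of a point equals $|C|$. Minimizing the objective then yields a minimal sound and complete explanation, which is precisely what Problem~\ref{problem1} requires. Since constraints (\ref{eq:c1})--(\ref{eq:c4}) are exactly the critical-subsystem encoding of \cite{WJA+14} (modulo the standard preprocessing that removes states from which $T$ is unreachable), I would borrow its correctness and concentrate the argument on constraint (\ref{eq:c5}) and the objective (\ref{eq:obj}).

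\emph{From a solution to an explainable counterexample.} Given a feasible point $(p,\theta,\mu)$, read off a memoryless deterministic strategy $\sigma$ by letting $\sigma(s)$ be an action with $\theta_{s,\alpha}=1$ for each $s$ with $p_s>0$ (one exists by (\ref{eq:c4})), and let $\mathcal{M}^{c,\sigma}$ be the subsystem induced by $\sigma$ on $\{s\in S:p_s>0\}\cup T$. From (\ref{eq:c2})--(\ref{eq:c3}) one checks that $p$ restricted to this subsystem is a sub-solution of the reachability fixpoint equations for $T$ under $\sigma$, so $p_{\sinit}$ lower-bounds the probability of reaching $T$ in $\mathcal{M}^{c,\sigma}$; with (\ref{eq:c1}) this exceeds $\lambda$, so $\mathcal{M}^{c,\sigma}$ is a counterexample. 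Put $C=\{\mathcal{T}(\alpha,\{\iota\}):\mu_{\alpha,\{\iota\}}=1\}$. For each subsystem state $s$ we have $\theta_{s,\sigma(s)}=1$, so (\ref{eq:c5}) forces some $\mu_{\sigma(s),\{\iota\}}=1$ with $\{\iota\}\subseteq L(s)$; selecting one such sentence per state, as Algorithm~\ref{alg:order} does, yields the per-state assignment witnessing completeness. For soundness, use optimality: a binary variable $\mu_{\alpha,\{\iota\}}$ (or $\theta_{s,\alpha}$) that is not witnessed by a subsystem state with the matching action and labels can be set to $0$ without violating any constraint, strictly decreasing the objective; hence in an optimal solution every $c=\mathcal{T}(\alpha,\{\iota\})\in C$ is the sentence instantiated by the action taken at, and propositions holding at, some state of $\mathcal{M}^{c,\sigma}$, i.e.\ $\alpha\in\act(s)$ and $\{\iota\}\subseteq L(s)$.

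\emph{From an explainable counterexample to a solution, and minimality.} Conversely, take any counterexample $\mathcal{M}^{c',\sigma'}$ for $\phi$ together with any sound and complete explanation $C'$. Set $\mu_{\alpha,\{\iota\}}=1$ iff $\mathcal{T}(\alpha,\{\iota\})\in C'$; set $\theta_{s,\sigma'(s)}=1$ for subsystem states and all other $\theta$ to $0$; set $p_s$ to the probability of reaching $T$ from $s$ inside $\mathcal{M}^{c',\sigma'}$ and $0$ outside. Then (\ref{eq:c1}) holds because $\mathcal{M}^{c',\sigma'}$ is a counterexample, (\ref{eq:c2})--(\ref{eq:c4}) hold by the reachability fixpoint equations of the subsystem, and (\ref{eq:c5}) holds because completeness of $C'$ supplies, for each subsystem state $s$, a sentence $\mathcal{T}(\sigma'(s),\{\iota\})\in C'$ with $\{\iota\}\subseteq L(s)$. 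This point is feasible with objective $|C'|$, so the MILP optimum is at most the size of every sound and complete explanation of every counterexample. Together with the forward direction, in which the optimum is attained by an honest sound and complete explanation of size equal to the objective, this shows the optimum equals the minimum, so the decoded $C$ solves Problem~\ref{problem1}.

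\emph{Main obstacle.} The subtle point is that the objective minimizes the number of sentences, not the size of the subsystem, so soundness and minimality of $C$ are not visible from feasibility alone: one must argue that an optimal solution carries no gratuitous binary variables — equivalently, that the sentences actually emitted by Algorithm~\ref{alg:order} (which traverses only states reachable from $\sinit$ inside the subsystem) are exactly the minimal sound and complete set. A lesser nuisance is the standard sub-fixpoint gap for reachability in the forward direction — a feasible $p$ only under-approximates the true subsystem probability — which I would sidestep by invoking the established correctness of the \cite{WJA+14} encoding of (\ref{eq:c1})--(\ref{eq:c4}) after the usual preprocessing.
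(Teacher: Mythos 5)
Your proposal is correct and, if anything, more complete than the paper's own proof, which is only a three-part sketch. The paper argues minimality in one sentence (``guaranteed by the objective function''), proves soundness by contradiction only in a restrictive special case (a sentence matched by exactly one state, with $\act(s)$ and $L(s)$ both singletons), and proves completeness directly from (\ref{eq:c4})--(\ref{eq:c5}) plus the objective. Your two-directional correspondence supplies exactly what the paper's minimality claim is missing: without the backward direction (every sound and complete explanation of every counterexample induces a feasible point with objective value $|C'|$), minimizing $\sum\mu_{\alpha,\{\iota\}}$ only shows the returned explanation is the smallest among those decodable from feasible points, not the smallest among all sound and complete explanations. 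Your soundness argument is also the right generalization of the paper's: at an optimum one may first zero out every $\theta_{s,\alpha}$ with $p_s=0$ (this only relaxes (\ref{eq:c3}) and trivializes the corresponding instances of (\ref{eq:c5})), after which any $\mu_{\alpha,\{\iota\}}=1$ unwitnessed by a subsystem state could be dropped, contradicting optimality; the paper's version of this only covers the singleton case. Two minor remarks. First, you state the fixpoint issue backwards: the danger with the naive encoding $p_s\le\sum P(s,\alpha,s')p_{s'}$ is \emph{over}-approximation of the true reachability probability (e.g., a self-loop end component admits $p_s=1$), not under-approximation; since you defer to the established correctness of the \cite{WJA+14} encoding after preprocessing --- as the paper implicitly does --- the conclusion is unaffected, but the phrasing should be fixed. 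Second, your observation that completeness requires \emph{designating} one sentence per state (rather than the objective magically forcing exactly one $\mu$ per state to be $1$, as the paper loosely claims) is the more accurate reading of the definition and of what Algorithm~\ref{alg:order} actually does.
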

\begin{proof}
(Sketch of proof.)
We need to prove that the solution to the MILP (\ref{eq:obj})-(\ref{eq:c5}) yields a minimal counterexample explanation that is sound and complete.

\startpara{Minimality} The minimality of the counterexample explanation is guaranteed by the objective function (\ref{eq:obj}), which minimizes the number of sentences in the explanation. 

\startpara{Soundness} By definition, a counterexample explanation is sound if each sentence in the explanation corresponds to the robotic behavior in one or more states of a counterexample violating the requirement.
We prove this by contradiction. 
Suppose there is a sentence in the MILP solution with $\mu_{\alpha, \{\iota\}}=1$, and there is only one MDP state $s$ satisfying $\alpha \in \act(s)$ and $\{\iota\} \subseteq L(s)$.
However, $s$ is not included in the counterexample, that is $p_s=0$ in the MILP solution. 
Suppose $\act(s)$ and $L(s)$ are both singular sets,
then we can write the MILP constraints (\ref{eq:c4}) and (\ref{eq:c5}) as follows:
$0 \le \theta_{s,\alpha}$ and $\theta_{s,\alpha} \le \mu_{\alpha, \{\iota\}}$, which reduces to 
$\mu_{\alpha, \{\iota\}} \ge 0$. 
The minimizing objective function (\ref{eq:obj}) enforces $\mu_{\alpha, \{\iota\}} = 0$, which is a contradiction with the assumption that $\mu_{\alpha, \{\iota\}} = 1$. 
Thus, the resulting counterexample explanation is sound.

\startpara{Completeness} By definition, a counterexample explanation is complete if the action taken at each counterexample state is described using exactly one sentence in the explanation.
Suppose $s$ is a counterexample state in the MILP solution, then $p(s)>0$. 
Hence, based on the MILP constraint (\ref{eq:c4}), at least one nondeterministic choice $\theta_{s,\alpha}$ should be true. 
Subsequently, the MILP constraint (\ref{eq:c5}) requires at least one sentence variable $\mu_{\alpha, \{\iota\}}$ to be true. 
Since the objective function (\ref{eq:obj}) is to minimize the number of sentences, the optimization solution will only choose one $\mu_{\alpha, \{\iota\}}$ to be true in order to describe state $s$.
Thus, the resulting counterexample explanation is complete.
\end{proof}




\bibliographystyle{IEEEtran}
\bibliography{explainCEX}


\end{document}